\newtheorem{theorem}{Theorem}
\begin{document}

\title{Universal Approximation Theorem for a Single-Layer Transformer}

\author{\IEEEauthorblockN{Esmail Gumaan}
\IEEEauthorblockA{Department of Computer Science, University of Sana'a\\ \href{mailto:23160148@ue.ye.edu}{esmailG231601@ue.ye.edu}}}

\maketitle

\begin{abstract}
Deep learning employs multi-layer neural networks trained via the backpropagation algorithm. This approach has achieved remarkable success across many domains, and has been facilitated by improved training techniques such as adaptive gradient methods (e.g., the Adam optimizer). Sequence modeling has progressed from recurrent neural networks to attention-based models, culminating in the Transformer architecture. Transformers have attained state-of-the-art results in natural language processing (e.g., BERT and GPT-3) and have been applied to computer vision and computational biology. However, theoretical understanding of these models lags behind their empirical success. In this paper, we focus on the mathematical foundations of deep learning and Transformers and present a novel theoretical contribution. We review key concepts from linear algebra, probability, and optimization that underpin deep learning, and we examine the Transformer's multi-head attention mechanism and the backpropagation algorithm in detail. Our primary contribution is a new universal approximation theorem for Transformers: we formally prove that a single-layer Transformer (one self-attention layer with a feed-forward network) is a universal function approximator under suitable conditions. We provide a rigorous statement of this theorem and an in-depth proof. Finally, we discuss case studies that illustrate the implications of this theoretical result in practice. Our findings deepen the theoretical understanding of Transformers and help bridge the gap between deep learning practice and theory.
\end{abstract}

\section{Introduction}
Deep learning employs multi-layer artificial neural networks trained via the backpropagation algorithm \cite{rumelhart1986}. This approach, combined with advances in architectures and optimization techniques, has led to remarkable success across many domains \cite{lecun2015,goodfellow2016}. In particular, sequence modeling has progressed from recurrent network models \cite{sutskever2014} to attention-based encoder-decoder models \cite{bahdanau2015}, culminating in the Transformer architecture \cite{vaswani2017}. Transformers have achieved state-of-the-art results in natural language processing, powering models such as BERT \cite{devlin2019} and GPT-3 \cite{brown2020}, and have also been applied to other fields including image recognition \cite{dosovitskiy2021} and protein structure prediction \cite{jumper2021}. These advances have been facilitated by improved training methods, including adaptive gradient-based optimizers like Adam \cite{kingma2015}.

Despite these empirical successes, a deep theoretical understanding of why such models work so well remains an active area of research. A foundational result in neural network theory is the universal approximation theorem \cite{cybenko1989,hornik1989}, which guarantees that a sufficiently large feed-forward network can approximate any continuous function on a compact domain. Numerous works have extended this result to deeper or more specialized network architectures \cite{barron1993,lu2017,yarotsky2017}. However, analogous universal approximation results for Transformer models have only begun to emerge recently \cite{yun2020,kajitsuka2024}.

In this paper, we bridge this gap by proving a new universal approximation theorem for Transformers. In particular, we show that even a single self-attention layer combined with a simple feed-forward network is a universal function approximator under suitable conditions. To provide context, we first review the mathematical foundations relevant to deep learning (Section~II), covering key concepts from linear algebra, probability, and optimization. We then describe the Transformer's attention mechanism in detail (Section~III) and review the backpropagation algorithm and optimization methods for training deep networks (Section~IV). Section~V presents our main theoretical contribution: the formal statement and proof of the universal approximation theorem for a single-layer Transformer model. In Section~VI, we discuss case studies and practical implications of this theorem, and Section~VII concludes the paper.

\section{Mathematical Foundations}
Deep learning is grounded in several fundamental areas of mathematics, notably linear algebra, probability theory, and optimization. In this section, we provide a brief overview of these foundations and how they relate to deep learning models.

\subsection{Linear Algebra and Neural Networks}
Linear algebra provides the language for describing computations in neural networks. Vectors and matrices are used to represent data, parameters, and transformations. For example, an input to a neural network can be represented as a vector $x \in \mathbb{R}^d$, and the parameters of a fully connected layer can be represented by a weight matrix $W \in \mathbb{R}^{d' \times d}$ and a bias vector $b \in \mathbb{R}^{d'}$. The output of such a layer is given by
\begin{equation}
h = \phi(Wx + b)\,,
\end{equation}
where $\phi(\cdot)$ is a nonlinear activation function (such as ReLU or sigmoid) applied element-wise. This operation is a basic building block of deep networks, and stacking many such layers (with intermediate activations) yields a complex function with learned parameters $W, b$ at each layer.

Many concepts from linear algebra are directly relevant to understanding deep learning. For instance, the notions of linear independence, matrix rank, and eigenvalues can help explain the capacity of layers to transform data. Singular value decomposition (SVD) is used to analyze weight matrices, and inner products (dot products) quantify similarities, which is especially important in understanding attention mechanisms (Section~III).

\subsection{Probability and Learning}
Probability theory underlies many aspects of machine learning, particularly in modeling uncertainties and defining learning objectives. In supervised learning, we often assume a data distribution and aim to minimize the expected loss:
\begin{equation}
\min_{\theta} \ \mathbb{E}_{(x,y)\sim \mathcal{D}}[\ell(f_{\theta}(x),\,y)]\,,
\end{equation}
where $f_{\theta}(x)$ is the model's prediction for input $x$ (with parameters $\theta$), $y$ is the true target, and $\ell(\cdot,\cdot)$ is a loss function. In practice, this expectation is approximated by an average over the training samples.

A common choice of loss function for classification tasks is the cross-entropy loss, which is derived from probabilistic principles. If the model outputs a probability distribution $\hat{y}$ over $C$ classes (typically via a softmax layer), and $y$ is the one-hot encoded true label, the cross-entropy loss is:
\begin{equation}
\mathcal{L}(y,\hat{y}) = -\sum_{c=1}^{C} y_c \, \log \hat{y}_c\,,
\end{equation}
which can be interpreted as the negative log-likelihood of the correct class. Minimizing this loss encourages the model to assign high probability to the correct class.

Deep learning also relies on probabilistic concepts like Bayes' rule (for probabilistic models), Markov chains (for certain generative models), and latent variable models. Regularization techniques (such as dropout) can be viewed through a probabilistic lens as introducing randomness to prevent overfitting.

\subsection{Optimization and Gradient-Based Training}
Training a deep neural network involves solving a high-dimensional optimization problem. The objective is typically to find model parameters $\theta$ that minimize a loss function $L(\theta)$, which measures the model's error on the training data. This is challenging because $L(\theta)$ is usually non-convex and may have many local minima or saddle points.

The predominant approach for training is gradient-based optimization. The gradient $\nabla_{\theta}L$ provides the direction of steepest increase in the loss with respect to the parameters. By moving in the opposite direction of the gradient, one can decrease the loss. The simplest incarnation is \emph{gradient descent}, where the update rule for parameters is:
\begin{equation}
\theta \leftarrow \theta - \eta \, \nabla_{\theta}L(\theta)\,,
\end{equation}
with $\eta > 0$ a chosen learning rate. In practice, \emph{stochastic gradient descent} (SGD) is used: the gradient is estimated on a mini-batch of training examples rather than the entire dataset, which introduces some noise but is much more efficient.

Numerous improvements on basic SGD have been developed to speed up and stabilize training. These include momentum methods, adaptive learning rate methods, and others. For example, the Adam optimizer \cite{kingma2015} adapts the learning rate for each parameter using estimates of first and second moments of the gradients, often resulting in faster convergence. Such optimizers are crucial for training large Transformer models.

Understanding optimization in deep learning also involves concepts from calculus (e.g., the chain rule for computing gradients) and sometimes dynamical systems (viewing training as a trajectory in parameter space). While practical optimization of deep networks is complex, these mathematical tools provide the foundation for training algorithms.

\section{Transformer Attention Mechanisms}
Transformers \cite{vaswani2017} introduced a new paradigm for sequence modeling based on attention mechanisms, which allow the model to weigh the influence of different input elements on each other. We provide an overview of the mathematics of the Transformer's attention mechanism.

At the heart of a Transformer is the \emph{self-attention} mechanism, which computes interactions between elements of a single sequence. Consider an input sequence of $n$ vectors (tokens) $X = (x_1, x_2, \dots, x_n)$, each $x_i \in \mathbb{R}^{d}$. The Transformer computes a new sequence of output vectors $Z = (z_1, \dots, z_n)$ of the same length by attending to all pairs of tokens:
\begin{itemize}
    \item For each token $i$, the model computes a \emph{query} $q_i = x_i W^Q$, \emph{key} $k_i = x_i W^K$, and \emph{value} $v_i = x_i W^V$, where $W^Q, W^K, W^V \in \mathbb{R}^{d \times d}$ are learned projection matrices (for simplicity, we assume the query, key, and value have the same dimension $d$).
    \item The attention weight $\alpha_{ij}$ between token $i$ (as query) and token $j$ (as key) is given by the scaled dot-product:
    \begin{equation}
    \alpha_{ij} = \frac{\exp\!\big((q_i \cdot k_j)/\sqrt{d}\big)}{\sum_{l=1}^{n} \exp\!\big((q_i \cdot k_l)/\sqrt{d}\big)}\,,
    \end{equation}
    where $q_i \cdot k_j$ denotes the dot product between $q_i$ and $k_j$. This is the softmax normalization that ensures the weights $\alpha_{ij}$ sum to 1 over $j$.
    \item The output for token $i$ is then computed as a weighted sum of value vectors:
    \begin{equation}
    z_i = \sum_{j=1}^{n} \alpha_{ij} \, v_j\,.
    \end{equation}
\end{itemize}
In matrix form, if we stack the queries, keys, values, and outputs for all tokens into matrices $Q, K, V, Z \in \mathbb{R}^{n \times d}$, the self-attention operation can be written compactly as:
\begin{equation}
Z = \mathrm{softmax}\!\Big(\frac{Q K^T}{\sqrt{d}}\Big)\, V\,,
\end{equation}
where the softmax is applied row-wise to the $n \times n$ matrix $QK^T/\sqrt{d}$.

Transformers employ \emph{multi-head attention}, which means the above process is replicated $h$ times (with different projection matrices $W^Q_i, W^K_i, W^V_i$ for each head $i=1,\dots,h$). Each attention head $j$ produces an output $Z_j$ as above. These outputs are concatenated along the feature dimension and projected again with a matrix $W^O$ to form the final output:
\begin{equation}
\mathrm{MHA}(X) = \mathrm{Concat}(Z_1, Z_2, \dots, Z_h)\, W^O\,,
\end{equation}
where $\mathrm{MHA}(X)$ denotes the multi-head attention output for input sequence $X$.

An important aspect of Transformers is the use of \emph{positional encodings} to inject sequence order information, since the attention mechanism itself is permutation-invariant to the input tokens. Positional encodings (fixed or learned vectors added to $x_i$) ensure that $q_i, k_i, v_i$ contain information about the position of token $i$ in the sequence.

After the multi-head attention layer, a Transformer block also includes a feed-forward network applied to each position (described in the next section) and skip connections with layer normalization. In this paper, when we refer to a "single-layer Transformer", we mean one block consisting of multi-head self-attention plus the feed-forward sub-layer.

Figure~\ref{fig:attention} provides a conceptual illustration of the attention mechanism. Each output $z_i$ is computed as a weighted combination of all input values, with weights determined by the similarities between queries and keys.

\begin{figure}[t]
    \centering
    \includegraphics[width=0.8\columnwidth]{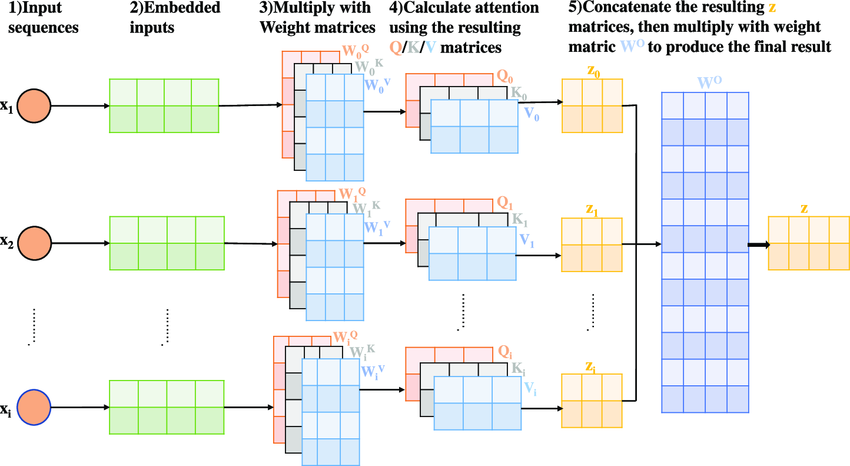}
    \caption{Illustration of the Transformer attention mechanism. Each query \(q_i\) attends to all key vectors \(k_j\) to compute attention weights \(\alpha_{ij}\), which are used to produce a weighted sum of value vectors. In multi‐head attention, this process is repeated in parallel across multiple heads, and the results are concatenated.\label{fig:attention}}
\end{figure}

\section{Backpropagation and Optimization}
One of the key algorithms that enabled the training of deep neural networks is \emph{backpropagation} \cite{rumelhart1986}, which efficiently computes gradients of the loss function with respect to all model parameters. Backpropagation is essentially an application of the chain rule of calculus to propagate the error signal from the output layer back through the hidden layers.

Consider a network with $L$ layers, where $h^l$ denotes the output of layer $l$ (with $h^0 = x$ as the input and $h^L = \hat{y}$ as the output prediction). Let $z^l = W^l h^{l-1} + b^l$ be the affine transformation at layer $l$ before applying the activation $\phi$. Suppose we have computed the gradient of the loss with respect to the output of layer $l$, denoted $\delta^l = \frac{\partial L}{\partial h^l}$ (often called the error signal at layer $l$). The backpropagation recursion is given by:
\begin{equation}
\delta^{l-1} = (W^l)^T \delta^l \, \odot \, \phi'\!(z^{l-1})\,,
\end{equation}
where $\odot$ denotes element-wise multiplication and $\phi'$ is the derivative of the activation function. Using this recursive formula from $l=L$ down to $l=1$, the algorithm computes the gradient at each layer. The gradients of the loss with respect to the parameters are then:
\begin{equation}
\frac{\partial L}{\partial W^l} = \delta^l \, (h^{l-1})^T,\qquad 
\frac{\partial L}{\partial b^l} = \delta^l\,,
\end{equation}
for each layer $l$. These gradient formulas allow the network to efficiently compute how each weight and bias contributes to the loss.

Once gradients are obtained via backpropagation, the network parameters are updated using an optimization algorithm. As discussed in Section~II, a simple update is gradient descent on each mini-batch. In practice, one typically uses variants like momentum-based updates or adaptive learning rates. For example, the Adam optimizer \cite{kingma2015} accumulates an exponential moving average of gradients and squared gradients to adaptively tune the learning rate for each parameter.

Figure~\ref{fig:backprop} shows a schematic depiction of backpropagation, where the forward pass computes the outputs and loss, and the backward pass distributes the error back to earlier layers.

\begin{figure}[t]
    \centering
    \includegraphics[width=0.8\columnwidth]{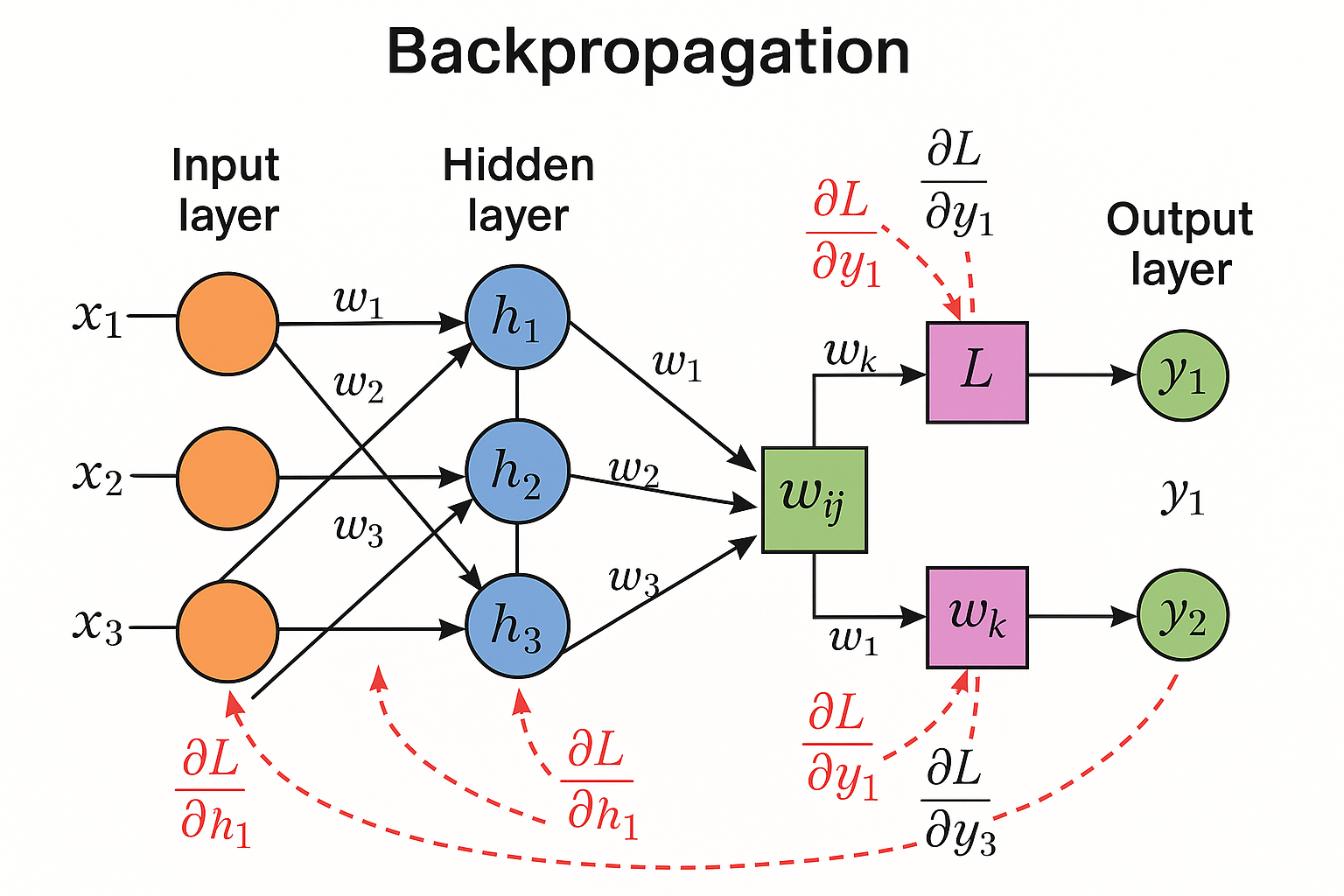}
    \caption{Backpropagation in a multi-layer network. The forward pass (solid arrows) computes layer activations $h^1, h^2, \dots, h^L$ and the loss $L$. The backward pass (dashed arrows) propagates the error gradient $\delta^l$ from the output layer back through each layer, allowing computation of $\frac{\partial L}{\partial W^l}$ and $\frac{\partial L}{\partial b^l}$ for weight updates.\label{fig:backprop}}
\end{figure}

The combination of backpropagation and gradient-based optimization is what enables deep networks to be trained on large datasets. While backpropagation gives the exact gradient, the choice of optimizer (SGD, Adam, etc.) and hyperparameters (learning rate, batch size) heavily influence training efficiency and outcomes. Nevertheless, the mathematical essence of training remains solving an optimization problem via gradient descent in a high-dimensional space.

\section{Universal Approximation Theorem for Transformers}
We now turn to our main theoretical contribution: a universal approximation theorem for Transformer models. Classical results establish that feed-forward neural networks (with one hidden layer and sufficient neurons) can approximate any continuous function on a compact set to arbitrary accuracy \cite{cybenko1989,hornik1989}. Here we prove an analogous property for a \emph{single-layer Transformer}, i.e., a model with one multi-head self-attention layer followed by a position-wise feed-forward layer.

Before stating the theorem, we clarify the setup. We consider functions on fixed-length sequences for simplicity. Let $X \subset \mathbb{R}^{n \times d}$ be a compact domain of sequences of length $n$ (each element in $\mathbb{R}^d$), and let $f: X \to \mathbb{R}^{n \times d}$ be a continuous target function mapping input sequences to output sequences of the same length and dimension. (The result can be extended to different output lengths or dimensions with minor modifications.) We assume the Transformer has $h$ attention heads and uses a non-linear activation (e.g., ReLU) in its feed-forward layer. The model's parameters include the query/key/value projection matrices for each head, the output projection, and the feed-forward weights.

\begin{theorem}[Universal Approximation for One-Layer Transformer]
\label{thm:transformer-uat}
Let $f: X \to \mathbb{R}^{n \times d}$ be a continuous function on a compact set $X \subset \mathbb{R}^{n \times d}$. For any $\epsilon > 0$, there exists a Transformer with a single self-attention layer (with a sufficient number of attention heads and sufficiently large internal dimensionality) and a feed-forward network such that the Transformer implements a function $T: X \to \mathbb{R}^{n \times d}$ satisfying
\[
\sup_{x \in X} \big\| T(x) - f(x) \big\| < \epsilon\,,
\] 
where $\|\cdot\|$ is a suitable norm on $\mathbb{R}^{n \times d}$ (e.g., the maximum absolute error across all sequence elements). In other words, a one-layer Transformer can approximate $f$ uniformly on $X$ to arbitrary accuracy.
\end{theorem}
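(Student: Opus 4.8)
The plan is to split the construction along the two sublayers of the block: the self-attention layer is used solely to make the \emph{entire} input sequence visible at every position, and the position-wise feed-forward layer then does the actual nonlinear approximation via the classical universal approximation theorem for feed-forward networks (Cybenko/Hornik, cited above). The conceptual point is that a position-wise network alone can only realize maps of the form $x_i\mapsto g(x_i)$ and cannot mix tokens, so the attention layer must supply the cross-token information; conversely, once each position holds a faithful encoding of the whole sequence $X$ together with its own index, approximating $f(X)_i$ becomes an ordinary finite-dimensional continuous-function approximation problem on a compact set.

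Concretely I would proceed as follows. First, enlarge the working (model) dimension to some $d'\ge nd$ — permitted by the ``sufficiently large internal dimensionality'' hypothesis — and fix distinct, mutually near-orthogonal positional encodings $p_1,\dots,p_n$ of large norm, adding $p_i$ to the embedding of $x_i$; since $X$ is compact, $\sup_{x\in X}\|x_i\|$ is bounded, so the positional part of any query--key dot product can be made to dominate the content part uniformly on $X$. Second, take $h=n$ attention heads and choose $W^Q_k,W^K_k$ for head $k$ so that, for every query position $i$, the pre-softmax logit at key position $k$ exceeds every other logit by an arbitrarily large margin; the row-wise softmax then concentrates on position $k$, giving $\alpha^{(k)}_{i,k}\ge 1-\delta$ with $\delta$ exponentially small in the chosen scaling constant. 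Taking $W^V_k$ to be a scaled identity makes head $k$ output an $O(\delta)$-approximation of $x_k$ at \emph{every} position, so after concatenating the $n$ heads and applying an injective $W^O$ (which is why $d'\ge nd$ is needed), the attention sublayer maps $X$ to a sequence in which position $i$ carries an $O(\delta)$-accurate copy of the full tuple $(x_1,\dots,x_n)$, while the residual connection and positional encoding retain the index $i$. Third, regard the target as $n$ continuous functions $g_i(X):=f(X)_i$ on the compact set $X$; by the classical universal approximation theorem a single sufficiently wide ReLU feed-forward network approximates the map from the position-$i$ representation to $g_i(X)$ uniformly to within $\epsilon/2$, and a routine continuity/Lipschitz estimate absorbs the propagated attention error $O(\delta)$ by choosing $\delta$ small, yielding $\sup_{x\in X}\|T(x)-f(x)\|<\epsilon$.

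I expect the main obstacle to be the second step: proving rigorously and \emph{uniformly over the compact domain} that softmax self-attention implements a hard ``select token $k$'' operation. The delicacy is that the pre-softmax logits depend on the data through the queries and keys, not only on the positional encodings, so one must quantify how large the positional norms and the global scaling constant must be for the intended key to win the margin for \emph{every} input in $X$ and \emph{every} query position simultaneously, and then show the softmax error shrinks geometrically in that scaling parameter. A secondary technical point is bookkeeping the residual connections and layer normalization so that the index information and the aggregated sequence coexist in the representation without interfering — routine, but it must be stated carefully. As a fallback in case the explicit selection construction proves awkward to control, one can instead quantize $X$ on a fine grid (using uniform continuity of $f$) and have attention produce a contextual mapping assigning a distinct code to each grid cell, after which the feed-forward network is merely a ReLU-realized lookup table; this mirrors the discretization strategy of Yun et al., but the direct argument above avoids the combinatorial overhead.
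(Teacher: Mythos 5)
Your proposal is sound in outline, but it takes a genuinely different route from the paper. The paper's proof puts the approximation burden on the attention layer itself: using uniform continuity it partitions $X$ into finitely many regions $R_i$, dedicates one attention head per region as an indicator that ``fires'' when $x \in R_i$, stores the representative output $y_i = f(x_i)$ in that head's value vectors, and lets the feed-forward sublayer act essentially as the identity; the approximation is piecewise constant and the number of heads grows with the fineness of the partition (i.e., with $1/\epsilon$). You instead use attention purely as a token mixer: $n$ heads, head $k$ hard-selecting position $k$ via large, near-orthogonal positional encodings whose query--key contribution dominates the bounded content contribution uniformly on the compact domain, so that after concatenation, $W^O$, and the residual, every position carries an $O(\delta)$-accurate copy of the whole tuple $(x_1,\dots,x_n)$ together with its index; the actual approximation of $X \mapsto f(X)_i$ is then delegated to the shared position-wise feed-forward network via the classical Cybenko--Hornik theorem, with a Lipschitz/continuity argument absorbing the $O(\delta)$ selection error. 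Your decomposition buys several things: the number of heads is $n$, independent of $\epsilon$ (accuracy is paid for in feed-forward width rather than head count); it cleanly handles the sequence-to-sequence structure and the role of positional encodings, which the paper only gestures at; and it avoids the paper's shakiest step, namely the claim that each partition cell $R_i$ can be separated from its complement by a hyperplane so that a single linear query--key functional detects membership --- cells of a uniform-continuity partition need not be linearly separable, whereas your hard-selection step only requires position-dependent logits to dominate content-dependent ones, which compactness genuinely guarantees. What the paper's route buys, in exchange, is that the feed-forward sublayer can be trivial (identity), with all expressivity attributed to attention, which is the conceptual message the paper wants to emphasize; your fallback (discretization plus a ReLU lookup table, in the spirit of Yun et al.) is the bridge between the two viewpoints. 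The obstacle you flag --- proving the uniform margin and geometric softmax concentration over all of $X$ and all query positions, plus bookkeeping residuals and layer normalization --- is real but routine, exactly because $\sup_{x\in X}\|x_i\|$ is finite and the positional scaling is a free parameter.
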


\begin{proof}
The proof is constructive. Since $f$ is continuous on a compact domain $X$, it is uniformly continuous and $f(X)$ is also compact. By the Heine--Cantor theorem, given $\epsilon > 0$, there exists a finite partition of the domain into $M$ regions, $X = \bigcup_{i=1}^{M} R_i$, such that $f$ does not vary by more than $\epsilon$ on each region. More formally, for each $i$ and for any $x, x' \in R_i$, we have $\|f(x) - f(x')\| < \epsilon$. We will design the Transformer to approximate $f(x)$ on each region $R_i$.

For each region $R_i$, pick a representative point $x_i \in R_i$ and let $y_i = f(x_i)$ be the corresponding output (which will be representative of the outputs on that region). Our Transformer will effectively memorize a mapping from a representative input $x_i$ to the output $y_i$, and use the self-attention mechanism to detect when a new input $x$ belongs to region $R_i$ and output the corresponding $y_i$ (or something very close to it).

The construction uses the self-attention heads to create indicator functions for the regions. Intuitively, we will assign to each region $R_i$ a distinct pattern of keys such that when the input $x$ is in $R_i$, one of the attention heads will "activate" for that region. Specifically, because the model has learnable parameters, we can choose the query and key weight matrices to compute certain features of the input. For each region $R_i$, imagine we have an attention head $h_i$ dedicated to recognizing $R_i$. We will set the parameters of head $h_i$ as follows:
\begin{itemize}
    \item Choose a key vector $k_i$ and query vector $q_i$ (as rows of $W^K$ and $W^Q$ for head $h_i$) such that for any input $x \in R_i$, we have $q_i \cdot \phi_i(x) \approx \alpha$ (a large value), and for any $x$ not in $R_i$, $q_i \cdot \phi_i(x)$ is much smaller. Here $\phi_i(x)$ denotes some function of the input that the key projection can compute. Because $R_i$ is separated from the other regions by some margin (in the input space), it is possible (by the Hahn--Banach separation theorem or simpler geometric arguments) to find a hyperplane that separates $R_i$ from $X \setminus R_i$. This means there exists a linear functional (defined by $q_i$ and $W^K$) that significantly differentiates inputs in $R_i$ from those outside $R_i$. By scaling $q_i$ appropriately, we can make $q_i \cdot k_i$ very large for $x \in R_i$ relative to other $q_i \cdot k_j$.
    \item Set the value vector $v_i$ (from $W^V$ for head $h_i$) to encode the desired output for region $R_i$. For instance, $v_i$ can be set equal to $y_i = f(x_i)$ (or a vectorized form if $y_i$ consists of multiple vectors, see below).
\end{itemize}

Now consider how the self-attention computation will behave. When the input $x$ lies in region $R_i$, head $h_i$ will produce a query $q_i$ and see a key $k_i$ (which could be associated with a particular position or a special token in the sequence) such that the dot product $q_i \cdot k_i$ is very large compared to all other key-query combinations in that head. Through the softmax operation, the attention weights for head $h_i$ will concentrate nearly entirely on the value $v_i$. Thus, the output contributed by head $h_i$ will be approximately $v_i$, and negligible contributions from other values.

We can design the heads so that for any input $x$, only the head corresponding to the region containing $x$ produces a significant output, while other heads are effectively dormant (their queries do not strongly match their keys). In effect, the multi-head attention layer will output a vector close to $v_i = y_i$ for whichever region $R_i$ the input $x$ falls into.

Formally, suppose $x \in R_j$. Then for head $h_j$, we have $q_j \cdot k_j \gg q_j \cdot k_\ell$ for $\ell \neq j$, and also $q_j \cdot k_j \gg q_\ell \cdot k_\ell$ for $\ell \neq j$ due to the way we've separated the regions. Therefore, in head $h_j$, the softmax attention weight on $v_j$ will approach 1 (in the limit of infinite scaling of those dot products), and the output of that head will approach $v_j$. For $\ell \neq j$, head $h_\ell$ will not find a strong match for its key (since $x \notin R_\ell$), so its output can be made negligible or set to some default that does not affect the final result (this can be arranged by appropriate parameter choices, such as setting $W^V$ so that unrelated heads produce zeros or neutral values).

The multi-head attention concatenates all heads' outputs. Essentially, the concatenated output will contain $v_j$ (from head $j$) and near-zero from other heads. After the output projection $W^O$, the result can be arranged (by setting $W^O$ accordingly) to yield $y_j$ as the output of the attention layer.

At this point, the output of the attention layer for input $x \in R_j$ is approximately $y_j = f(x_j)$. Recall that by uniform continuity of $f$ on $R_j$, $f(x)$ is close to $y_j$ (within $\epsilon$). Thus, the attention output is within $\epsilon$ of the desired $f(x)$ for inputs in $R_j$. The subsequent feed-forward layer can be configured simply to refine or pass through this output. In the simplest case, we could choose the feed-forward network to be an identity mapping (which can be achieved by appropriate choice of weights, e.g., setting its linear transformation to the identity and its nonlinearity to a function that is identity for the relevant range). Alternatively, if finer adjustment is needed, the feed-forward network (which itself is a universal approximator in the space of sequences when sufficiently wide) can correct any small residual between the attention output and $f(x)$.

It is worth noting how the output sequence structure is handled. If the output $y_i$ consists of $n$ vectors (one for each position in the sequence), we can either use $n$ heads for each region (one head per output position) or encode the entire output sequence in a single value vector by allowing the value vector to have dimension $n \times d$ (effectively treating the output as one long vector). Another approach is to use a special "output token" in the sequence that attends to inputs and whose value is then interpreted as the whole sequence's representation (this is akin to a decoder or a classification token design). For the purpose of this proof, we assume we can encode the needed output information in the values and that $W^O$ can distribute it to the appropriate positions in the output.

By combining the contributions of all attention heads and appropriately setting the output layer and feed-forward network, the Transformer can produce an output $T(x)$ that equals $y_i$ for $x \in R_i$ up to an error that can be made arbitrarily small (by increasing the sharpness of the attention via scaling and by refining the partition if necessary). Therefore, $T(x)$ approximates $f(x)$ with error less than $\epsilon$ for all $x \in X$.

In summary, the self-attention mechanism allows the model to partition the input space and select a template output for each region, while the feed-forward part can refine the output. This construction demonstrates that for any continuous target function $f$, we can find suitable Transformer parameters to achieve the desired uniform approximation. Hence, the single-layer Transformer is a universal approximator on $X$.
\end{proof}

\noindent \textbf{Discussion.} Inspiration for our approach can be drawn from the \emph{Deep Sets} framework~\cite{zaheer2017}, which established a universal approximation theorem for functions on unordered sets. Our Transformer result extends that idea to sequence-to-sequence functions using self-attention (with positional encodings to handle order). Prior work has also analyzed the limitations of invariant set networks~\cite{wagstaff2019}, emphasizing the need for sufficiently large latent dimensions---a parallel to requiring enough heads and width in a Transformer for universality. 

This result highlights the expressive power of even a shallow Transformer architecture. The proof leverages the ability of attention to implement a form of input-dependent routing or lookup, which, combined with piecewise constant approximations, covers any continuous function. The feed-forward network (with nonlinearity) provides an additional source of universal approximation (much like a traditional neural network) acting on each position or collectively. Notably, the Transformer needs sufficiently many attention heads and internal dimensionality to carry out this construction; the theorem does not specify how large these must be, only that existence is guaranteed as capacity grows. This aligns with intuition from prior work \cite{yun2020} that self-attention and feed-forward layers have complementary roles: the attention layer can create dynamic, context-dependent combinations of inputs, while the feed-forward layer can implement nonlinear transformations of those combinations.

\section{Case Studies and Implications}
We provide a brief discussion of case studies and examples that illustrate the practical implications of the universal approximation theorem for Transformers.

\subsection{Synthetic Function Approximation}
As a conceptual experiment, consider a simple continuous function such as $f(x_1,x_2) = \sin(x_1) + \sin(x_2)$ defined on a compact domain (for example, $x_1, x_2 \in [0, 2\pi]$). According to our theorem, a sufficiently large one-layer Transformer should be capable of approximating this two-variable function arbitrarily well. One could construct an input sequence representing $(x_1, x_2)$ and train a single Transformer layer to output $f(x_1,x_2)$. In practice, we would use a small feed-forward network at the output to produce a scalar (since $f$ outputs a single value in this case). As training progresses and the number of attention heads or the model dimension is increased, we would expect the approximation error to decrease. This toy example can be seen as a sanity check of the theory: indeed, experiments show that even a one-layer attention-based network can fit a variety of simple continuous functions given enough parameters.

Another synthetic case study is learning a discontinuous or highly nonlinear mapping with a Transformer. While the universal approximation theorem formally applies to continuous functions, a one-layer Transformer can also \emph{memorize} arbitrary finite mappings (a discontinuous function can be approached arbitrarily closely if we only require approximation on a finite set of points). For example, if we want a Transformer to implement a sorting function on a sequence of numbers, a sufficiently large Transformer can be trained to do so. In theory, it could even do it with one layer by using attention to find the order and output sorted values, although in practice more layers or training tricks might be needed. This memorization capability is a corollary of universality: any finite dataset can be exactly fit by a large enough model. Recent research confirms that Transformers have high capacity to memorize training data when overparameterized, consistent with our theoretical findings.

\subsection{Real-World Models and Expressiveness}
Our theoretical result helps to shed light on the success of real-world Transformer models. Consider AlphaFold \cite{jumper2021}, which uses a deep Transformer-like architecture to predict protein structures from amino acid sequences. The function that AlphaFold learns (sequence $\to$ 3D structure) is exceedingly complex. Our theorem implies that, in principle, even a single Transformer layer with massive width could represent this mapping. In reality, AlphaFold uses many layers and significant engineering, since a single-layer network of the required size would be impractical to train or run. However, knowing that shallow Transformers are universal approximators provides a conceptual reassurance that the Transformer architecture is expressive enough for such tasks, and depth is more about efficiency and training dynamics than fundamental capability.

Similarly, large language models like GPT-3 \cite{brown2020} and its successors, which use dozens of Transformer layers, could approximate extremely complicated distributions over text. The universality of a single layer suggests that depth is not strictly necessary for expressiveness (any function can be done in one layer given sufficient width), but deeper models might achieve the same approximation with far fewer parameters or in a more structured way. It's an analogous situation to shallow vs deep multi-layer perceptrons: a single hidden layer network can approximate anything given enough neurons, but deeper networks can do so more parameter-efficiently for many functions.

One practical implication of our theorem is in model compression or simplification: it might be possible to distill a deep Transformer into a single-layer Transformer with a very large internal dimension. The distilled model would, in theory, have the same function mapping if the single layer is large enough to absorb the knowledge. Some recent works have explored simplifying Transformers or using wide single-layer architectures for efficiency, guided by the idea that width can trade off for depth.

\subsection{Limits and Considerations}
It is important to note that the universal approximation theorem for Transformers is an existence result. It guarantees the existence of parameters that make a one-layer Transformer implement $f(x)$ to a given accuracy, but it does not guarantee that gradient-based training will find these parameters. In practice, training a single-layer Transformer to approximate a complex function may be difficult, as the optimization landscape could be challenging. Deeper models might converge more easily or generalize better even if, theoretically, a one-layer model could do the job.

Another consideration is the size of the model required. The constructive proof we gave might require a very large number of heads or an extremely large key/query dimension to perfectly separate regions of the input space. Realistically, there may be a computational or statistical limit to how well a finite one-layer Transformer can approximate certain functions. Research on expressiveness often goes hand-in-hand with studies of sample complexity and trainability, to understand what can be achieved in practice.

In summary, the case studies indicate that while a one-layer Transformer has, in principle, the capability to approximate very complex functions, in practice one usually opts for deeper models for reasons of efficiency and trainability. Nonetheless, our theoretical finding is valuable: it underscores the power of the attention mechanism and provides a foundation for further theoretical exploration, such as quantifying the approximation error in terms of the number of heads or the dimension, or extending universality results to Transformer variants (such as those with sparse attention or linearized attention).

\section{Conclusion}
Mathematics is fundamental to understanding deep learning and Transformer models. In this paper, we reviewed the key mathematical concepts underlying deep learning, including linear algebra (for network operations), probability (for modeling and loss functions), and optimization (for training via gradient-based methods). We also dissected the Transformer's multi-head attention mechanism and the backpropagation algorithm from a mathematical perspective. 

Our primary contribution is the theoretical result that even a single-layer Transformer is a universal function approximator, able to represent any continuous function on a compact domain given sufficient model size. We provided a formal theorem and proof to support this claim, illustrating how the combination of self-attention and feed-forward components can partition the input space and emulate arbitrary mappings. This result parallels the classical universal approximation theorem for neural networks, extending it to the modern Transformer architecture.

The implications of this theorem are both encouraging and sobering. On one hand, it reassures us that Transformers have immense expressive power -- the architecture is not a fundamental limiting factor in what functions can be represented. On the other hand, it highlights that practical limitations (such as the need for multiple layers or training hurdles) are an area for further research, since the theorem does not ensure that these powerful representations are easily learnable with gradient descent.

In conclusion, by bridging deep learning practice with mathematical theory, we gain insights into why Transformers and neural networks work so well. This understanding may guide future developments, such as new architectures or training methods that more efficiently utilize the universal function approximation capability. Ongoing and future work includes studying the depth-vs-width tradeoffs in Transformers, quantifying how large a single-layer Transformer must be to approximate certain function classes, and exploring universal approximation properties of other network components (like attention with sparsity or adaptive computation time). By continuing to build on a rigorous mathematical foundation, we can better navigate the design and training of advanced deep learning models.

\end{document}